\documentclass[letterpaper, 10 pt, conference]{ieeeconf}  

\IEEEoverridecommandlockouts                              

\overrideIEEEmargins                                      %
\usepackage[hidelinks,colorlinks]{hyperref}
\hypersetup{
  colorlinks = true, 
  allcolors = blue,
  urlcolor = blue, 
  linkcolor = blue, 
  citecolor = blue, 
}
\usepackage{cite}
\usepackage[amssymb]{SIunits}
\usepackage{graphicx}
\usepackage{subcaption}
\usepackage{multirow}
\usepackage{amsmath,amssymb}
\newtheorem{lemma}{Lemma}
\usepackage{algorithm}
\usepackage{algpseudocode}

\usepackage{enumitem}
\usepackage{balance}
\setlength{\marginparwidth}{1.5cm}
\usepackage[misc]{ifsym}
\usepackage{booktabs}
\usepackage{lscape}

\usepackage{enumitem}
\setlist[itemize]{noitemsep,nolistsep,leftmargin=17pt}
\setlist[enumerate]{noitemsep,nolistsep,leftmargin=17pt}
\usepackage[font={small}]{caption}
\captionsetup[table]{skip=3pt}
\setlength{\abovecaptionskip}{3pt plus 0pt minus 0pt}
\setlength{\belowcaptionskip}{5pt plus 0pt minus 0pt} 
\setlength\intextsep{2pt}
\setlength{\textfloatsep}{5pt plus 0pt minus 0pt}
\setlength{\abovedisplayskip}{6pt}
\setlength{\belowdisplayskip}{6pt}

\usepackage[colorinlistoftodos,prependcaption,textsize=tiny]{todonotes}
\setlength{\marginparwidth}{1.5cm}

\newcommand{\fref}[1]{Fig.~\ref{#1}}
\newcommand{\sref}[1]{Section~\ref{#1}}
\newcommand{\tref}[1]{Table~\ref{#1}}

\begin{document}
\title{\LARGE \bf
iMTSP: Solving Min-Max Multiple Traveling\\Salesman Problem with Imperative Learning
}
\author{Yifan Guo, Zhongqiang Ren, and Chen Wang*
\thanks{Corresponding author. E-mail: \texttt{chenw@sairlab.org}. The source code, pre-trained model, and relevant data of this paper will be released at \url{https://github.com/sair-lab/iMTSP}}
\thanks{Yifan Guo is with the Flight Dynamics \& Control/Hybrid Systems Lab, Purdue University, West Lafayette, IN 47907. Email: \texttt{guo781@purdue.edu}}
\thanks{Zhongqiang Ren is with Robotics Institute, Carnegie Mellon University, Pittsburgh, PA 15213, USA. Email: \texttt{zhongqir@andrew.cmu.edu}}
\thanks{Chen Wang is with Spatial AI \& Robotics (SAIR) Lab, Department of Computer Science and Engineering, University at Buffalo, NY 14260, USA.}
}%

\maketitle
\thispagestyle{empty}
\pagestyle{empty}

\begin{abstract}
This paper considers a Min-Max Multiple Traveling Salesman Problem (MTSP), where the goal is to find a set of tours, one for each agent, to collectively visit all the cities while minimizing the length of the longest tour.
Though MTSP has been widely studied, obtaining near-optimal solutions for large-scale problems is still challenging due to its NP-hardness.
Recent efforts in data-driven methods face challenges of the need for hard-to-obtain supervision and issues with high variance in gradient estimations, leading to slow convergence and highly sub-optimal solutions.
We address these issues by reformulating MTSP as a bilevel optimization problem, using the concept of imperative learning (IL). This involves introducing an allocation network that decomposes the MTSP into multiple single-agent traveling salesman problems (TSPs). The longest tour from these TSP solutions is then used to self-supervise the allocation network, resulting in a new self-supervised, bilevel, end-to-end learning framework, which we refer to as imperative MTSP (iMTSP). Additionally, to tackle the high-variance gradient issues during the optimization, we introduce a control variate-based gradient estimation algorithm.
Our experiments showed that these innovative designs enable our gradient estimator to converge $20\times$ faster than the advanced reinforcement learning baseline, and find up to $80\%$ shorter tour length compared with Google OR-Tools MTSP solver, especially in large-scale problems (e.g. $1000$ cities and $15$ agents).
\end{abstract}

\section{Introduction}

The multiple traveling salesman problem (MTSP) seeks tours for multiple agents such that all cities are visited exactly once while minimizing an objective function defined over the tours.
MTSP arises in numerous robotics topics that require a team of robots to collectively visit many target locations such as unmanned aerial vehicles path planning~\cite{sundar2013algorithms, ma2019coordinated}, automated agriculture~\cite{carpio2021mp}, warehouse logistics~\cite{10109784}.
As their names suggest, Min-Sum MTSP minimizes the sum of tour lengths, while Min-Max MTSP minimizes the longest tour length, both of which are NP-hard~\cite{bektas2006multiple} to solve to optimality.\footnote{MTSP is challenging and renowned for its NP-hardness~\cite{bektas2006multiple}.
Intuitively, MTSP involves many decision variables: both assigning the cities to the agents and determining the visiting order of the assigned cities. As an example, an MTSP with \(100\) cities and \(10\) agents involves \(\frac{100! \times 99!}{10! \times 89!} \approx 10^{20000}\) possible solutions, while the number of atoms in the observable universe is estimated to be ``merely'' in the range from \(10^{78}\) to \(10^{82}\)~\cite{gaztanaga2023mass}.}
In this paper, we focus on Min-Max MTSP while our method will also be applicable to Min-Sum MTSP. Min-Max MTSP is more often used when the application seeks to minimize the overall completion time~\cite{park2023learn}.
Due to the NP-hardness of MTSP, one must consider the trade-off between the solution optimality and the computational efficiency, since finding an optimal solution is often intractable for large problems.

A variety of classic (non-learning) approaches, including exact~\cite{ham2018integrated,ren2021ms}, approximation~\cite{berczi2023approximations}, and heuristic~\cite{al2019comparative,necula2015tackling,ma2019coordinated} algorithms, have been developed to handle MTSP, but most of them consider minimizing the sum of tour lengths (Min-Sum), rather than the maximum tour length (Min-Max).
In recent years, there has been a notable shift towards employing deep learning techniques to tackle TSP and MTSP~\cite{liang2023splitnet,hu2020reinforcement, xin2021neurolkh,miki2018applying, vinyals2015pointer,kool2018attention,nazari2018reinforcement,park2021schedulenet, khalil2017learning, wu2021learning}.
However, these methods still have fundamental limitations, particularly in their ability to generalize to unseen problem sizes, and in consistently finding high-quality solutions for large-scale problems.
For deep learning-based methods that supervise the model with heuristic or exact solutions~\cite{xin2021neurolkh,miki2018applying, vinyals2015pointer}, they struggle with limited supervision on small-scale problems and lack feasible supervision for large-scale instances, leading to poor generalization. Reinforcement learning (RL)-based approaches~\cite{hu2020reinforcement,kool2018attention,nazari2018reinforcement,park2021schedulenet} usually exploit implementations of the policy gradient algorithm, such as the REINFORCE~\cite{williams1992simple} algorithm and its variants. These RL approaches face the issue of high-variance gradient estimations, which can result in slow convergence and highly sub-optimal solutions. Researchers have also developed strategies like training greedy policy networks~\cite{nazari2018reinforcement, kool2018attention, khalil2017learning} or iteratively improving solutions~\cite{ wu2021learning}, which often get stuck at local optima.

To enhance generalization and accelerate the training process, we reformulate the Min-Max MTSP as a bilevel optimization problem. This comprises an upper-level optimization that focuses on training a city allocation network, and a lower-level optimization that solves multiple signal-agent TSPs, inspired by imperative learning (IL)~\cite{fu2023islam, yang2023iplanner,zhan2023imatching}.
Specifically, the allocation network is to assign each city to an agent, which decomposes the MTSP into multiple single-agent TSPs.
Then, a classic TSP solver, which can quickly produce a near-optimal tour for a single-agent TSP with up to a few hundred cities, is employed to find a tour for each agent based on the allocation. 
A key aspect is that the upper-level optimization, namely the training of the allocation network, is supervised by the lower-level TSP solver. This metric-based supervision from the lower-level optimization results in an end-to-end, self-supervised framework for solving MTSP, which we refer to as imperative MTSP (iMTSP).

One technical obstacle when developing iMTSP is that the space of possible city allocations is discrete.
As a result, the objective function, i.e., the maximum tour length given by the lower-level TSP solver, is non-differentiable to the city allocation, which prevents the back-propagation to the allocation network's parameters.
One of the solutions is to explore the policy gradient algorithm or the reparameterization trick~\cite{figurnov2018implicit}, which often use Monte-Carlo methods to estimate the gradient. However, these methods can lead to gradient estimations with high variance since the space of city allocation is extremely large.
To handle this difficulty, we introduce a control variate~\cite{nelson1990control} to reduce the variance of the gradient estimations.
Parameterized by a trainable surrogate network running in parallel with the TSP solver, the control variate can be efficiently computed, and the overall algorithm can provide low variance gradient information through the non-differentiable solver and the discrete decision space.
The main contributions of this paper are summarized as follows:
\begin{enumerate}[leftmargin=0.3cm]
    \item \textbf{Framework}: We formulate the MTSP as a bilevel optimization problem, introduce a new end-to-end, self-supervised framework for Min-Max MTSP. This decomposes a large-scale Min-Max MTSP into several smaller TSPs, each of which can be solved efficiently.
    \item \textbf{Methodology}: To tackle the problem of bilevel optimization in discrete space, we introduce a control variate-based technique for back-propagation.
    This produces low-variance gradient estimations despite the non-differentiability of the lower-level optimization problem.
    \item \textbf{Experiments}: We corroborate iMTSP in terms of efficiency and solution quality against both an RL-based and a classic (non-learning) methods.
    On large-scale problems, iMTSP achieves at most $80\%$ shorter tours and with only $1.6\%$ inference time compared with Google OR-Tools routing library~\cite{ORTools_options}.
    Also, iMTSP's gradient estimator converges about $20\times$ faster and produces $3.2 \pm 0.01\%$ shorter tours, compared with the RL-based approach~\cite{hu2020reinforcement}.
\end{enumerate}
\section{Related works}\label{sec:related}
MTSP has been extensively studied. Analytical methods~\cite{ham2018integrated} use graph theory, game theory, and combinatorial optimization, etc.~\cite{cheikhrouhou2021comprehensive}, and can provide guarantees on solution optimality. However, these methods usually struggle at solving large-scale MTSP due to its NP-hardness.

Heuristic approaches for MTSP handle the challenge by decomposing the original problem into several phases which in turn reduces the computational burden of each phase.
Examples are particle swarm optimization~\cite{wei2020particle}, ant colony optimization~\cite{necula2015tackling}, etc~\cite{kitjacharoenchai2019multiple,  murray2020multiple}. 
Although heuristic algorithms usually run faster than analytical approaches, they lack theoretic guarantees on solution quality and could produce highly sub-optimal solutions especially for large-scale problems with hundreds and thousands of cities. 

Recently, researchers tried to solve MTSP with machine learning (ML) methods using recurrent neural networks (RNN)~\cite{rumelhart1985learning}, transformer~\cite{vaswani2017attention}, etc. These learning-based methods usually scale well as the size of the problem increases. However, they may not generalize well to problem sizes outside the training set. 
As an early attempt~\cite{vinyals2015pointer}, an RNN based method is developed and can handle TSP with up to $50$ cities.
Hu et al.~\cite{hu2020reinforcement} propose a hybrid approach within the RL framework, which shows impressive generalization ability.
However, their gradient estimator has high variance, which leads to slow convergence and sub-optimal solutions especially when dealing with large-scale problems. 
Other ML-based methods include~\cite{nazari2018reinforcement, kool2018attention, khalil2017learning, wu2021learning}. Note that most of them consider TSP instead of MTSP.

\noindent\textbf{Imperative Learning} As an emerging learning framework, imperative learning (IL) has inspired some pioneering works in simultaneous localization and mapping (SLAM)~\cite{fu2023islam}, path planning~\cite{yang2023iplanner}, and visual feature matching~\cite{zhan2023imatching}. The most significant difference between our work and theirs is that the lower-level optimization problems in their scenarios are all in continuous space, while iMTSP needs to deal with an extremely large discrete decision space.

\noindent\textbf{Control Variate} Control variate has been investigated by the ML community.
In general, any policy gradient algorithm with a baseline or an actor-critic can be interpreted as an additive control variate method~\cite{greensmith2004variance}. Additionally, Grathwohl et.al.~\cite{grathwohl2017backpropagation} propose gradient estimators for a non-differentiable function with continuous and discrete inputs, respectively. Their insight is evaluating the control variate directly with a single sample estimation of the gradient variance. Our work leverages this insight. Gu et.al.~\cite{gu2017interpolated} produce a family of policy gradient methods using control variate to merge gradient information from both on-policy and off-policy updates. Control variate technique has also been used for meta-RL~\cite{liu2019taming}, federated learning~\cite{karimireddy2020scaffold}, etc~\cite{baker2019control,geffner2018using, guo2021multi}.
\section{Problem Formulation}\label{sec:formulation}

Given a set of $N$ cities collectively denoted as $X = \{\mathbf{x_1}, ..., \mathbf{x_N}\}$, along with $M$ agents starting from a common city, i.e., the depot $\mathbf{x_d} \in X$, the MTSP is characterized by the tuple $(X, \mathbf{x_d}, M)$. The objective is to find, for each agent $j$, a tour $T_j = \{\mathbf{x}_{j,1},..., \mathbf{x}_{j, k},...\}$ such that all cities are visited exactly once, each tour returns to the depot, and the length of the longest tour reaches the minimum. The optimization objective for Min-Max MTSP can be defined as:
\begin{equation}
   \min_{T_{1,\dots,M}} \max_jD(T_j),
\end{equation}
where $D(T_j)$ is the cost of route $T_j$. For easier understanding, we use Euclidean distance as the cost in this paper, i.e.,:
\begin{equation}\label{cost}
       D(T_j)=\sum_{k=1}^{\lVert T_j \rVert} \|\mathbf{x}_{j,k+1} - \mathbf{x}_{j,k}\|_2,\quad j=1,2,\cdots,M,
\end{equation}
where $\mathbf{x}_{j,k}$ denotes the city coordinate that is the $k$-th destination of the $j$-th agent.

\section{Methods}\label{sec:methods}

This section first elaborates on the reformulation of MTSP as a bilevel optimization problem, and then introduces the forward computation of the our framework and the control variate-based optimization algorithm.

\subsection{Reformulation of MTSP}
The MTSP formulation in \sref{sec:formulation} is reformulated as the following bilevel optimization problem: 
For each instance $(X,\mathbf{x_d},M)$, the upper-level problem is to assign each city to an agent, and the lower-level problem is to compute the visiting order of assigned cities for each agent. We define $f$ as the allocation network with parameters $\mathbf{\theta}$, and $g$ as the single-agent TSP solver with parameters $\mathbf{\mu}$. The imperative MTSP (iMTSP) is defined as:
\begin{subequations}\label{eq:imtsp}
\begin{align}
    & \min_{\theta} U_\theta \left(L(\mu^*), f(\theta); \gamma\right);\quad \min_\gamma U_\gamma\left(L(\mu^*), s(\gamma); \theta\right)\\
    & \textrm{s.t.}\quad \mathbf{\mu}^* = \arg\min_\mu L(\mu),\\
    & \quad \quad~ L \doteq \max_j D(g(a_j;\mu)), \quad j=1,2,\cdots,M, \\
    & \quad \quad~ a_j \sim f(\theta),
\end{align}
\end{subequations}

where $a_j$ is the allocation for agent $j$ sampling from the probability matrix by row, and $L$ returns the maximum route length among all agents. $s$ is a surrogate network with parameter $\gamma$ , which will be explained in detail in \sref{sec:optimization} along with the specific definitions of upper-level objectives.
Specifically, the upper-level optimization consists of two separate steps: the upper-level cost $U_\theta$ is to optimize the allocation network $f(\theta)$, while $U_\gamma$ is to optimize the surrogate network $s(\gamma)$. The framework of iMTSP is shown in \fref{fig:multi architecture}.

\begin{figure}[t]
	\centering
    \includegraphics[width=\linewidth]{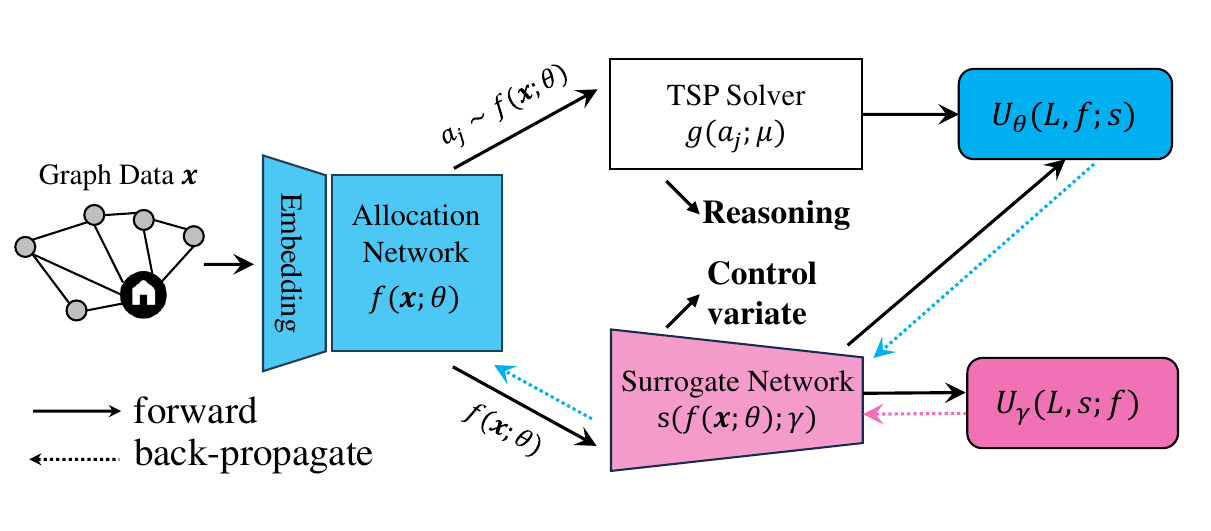}
    \caption{The framework of our self-supervised MTSP network. The allocation network uses supervision from the TSP solver, and the surrogate network is supervised by the single sample variance estimator, reducing the gradient variance.}
	\label{fig:multi architecture}
\end{figure}

\subsection{Forward Propagation Details}

We next present the details of solving iMTSP \eqref{eq:imtsp}. For each instance with coordinates $X$, we first embed agents and cities into feature vectors that contain the agent-city and city-city topological information. The embedded instance $X^\prime$ is the input to the allocation network $P = f(X^\prime,\mathbf{x_d},M; \mathbf{\theta})$. Each element $p_{i,j}$ in the output $P \in \mathbb{R}^{N \times M }$ is the probability that the $i$-th city is allocated to the $j$-th agent. This probability matrix is then sampled in each row so that each city is assigned to an agent. The sampling result is then reorganized into allocations $A=\{a_1, a_2,...,a_M\}$, where $a_j$ represents the set of cities assigned to the $j$-th agent. Subsequently, a TSP solver $g(a_j; \mathbf{\mu})$ for each agent $j$ is invoked to order the cities in the allocation $a_j$ and return a tour. Finally, the cost of each tour is computed with \eqref{cost} and the length of the longest tour is the cost of that instance, i.e., $L=\max_jD(g(a_j;\mu))$. The internal process of embedding and the allocation network is below.

\subsubsection{Embedding}
To extract the topological information from the coordinate data, we use a compositional message passing neural network (CMPNN)~\cite{gilmer2017neural} to embed each city and agent as a feature vector. First, the feature of the city $i$ is iteratively updated using its feature $\mathbf{f}_i^t$, features of its neighbors $\mathbf{f}_k^t$, and the weights $\mathbf{e}_{i,k}$ on the connecting edges. Here, $i$ and $k$ indicate different cities, and $t$ is the number of iterations of graph aggregation (instead of the training loop). The embedding of the entire graph $\mathbf{f}_g$ comes from all cities' features. Finally, a context feature vector $\mathbf{f}_c$ is formed as a concatenation of the graph feature $\mathbf{f}_g$ and the feature of the depot $\mathbf{f}_d$. Formally speaking, the embedding process is:
\begin{subequations}
\begin{align}
    \mathbf{m}_{i,k}^t &= \Omega(\mathbf{f}_i^t,\mathbf{f}_k^t,\mathbf{e}_{i,k}),\\
    \mathbf{l}_i^t &= \Phi _{k \in \mathcal{N}_i(\mathbf{m}_{i,k}^t)},\\
    \mathbf{f}_{i}^{t+1}&=\Psi (\mathbf{f}_i^t, \mathbf{l}_k),\\
    \mathbf{f}^{t}_{g}&=\Phi^\prime (\mathbf{f}_i^t),\quad i={2,3,...,n},\\
    \mathbf{f}_c &= [\mathbf{f}_g;\mathbf{f}_d],
\end{align}
\end{subequations}
where functions $\Omega$ and $\Psi$ are represented by neural networks; $\Phi$ and $\Phi^\prime$ are element-wise nonlinear functions; $\mathcal{N}_i$ contains all the neighbours of city $i$; $\mathbf{m}_{i,k}^t$ is the processed information from city $k$ to city $i$; and $\mathbf{l}_i^t$ represents aggregated information from all the neighbours of city $i$.
After obtaining the context feature $\mathbf{f}_c$, the embedding of agent $j$ is calculated using the attention mechanism as follows:
\begin{subequations}
\begin{align}
    \mathbf{k}_{j,i} &= \mathbf{\theta_k}\mathbf{f}_i,\quad \mathbf{q}_{j} = \mathbf{\theta_q} \mathbf{f}_c, \quad i=2,3,...,n\\
    \mathbf{v}_{j,i} &= \mathbf{\theta_{v}} \mathbf{f}_i,\quad i=2,3,...,n\\
     u_{j,i} &= \frac{\mathbf{q}^{\intercal}_j \mathbf{k}_{j,i}}{\sqrt{d_{\mathbf{k}}}},\quad i=2,3,...,n\\
     w_{j,i}&=\frac{e^{u_{j,i}}}{\sum_k e^{u_{j,k}}},\quad i={2,3,...,n};~k={2,3,...,n}\\
     \mathbf{h}_j &= \sum_i w_{j,i}\mathbf{v}_{j,i},
\end{align}
\end{subequations}
where $\mathbf{k}$, $\mathbf{q}$ and $\mathbf{v}$ are respectively the keys, queries, and values of the attention layer. $d_{\mathbf{k}}$ is the dimension of keys, and $i$ and $k$ are used to index cities. All the $\theta$ are trainable parameters.
The keys and values are first computed from the city embedding, and the queries are from the context feature $\mathbf{f}_c$. Then, we calculate the attention weight $w_{j,i}$ of each agent-city pair. Finally, the agent embedding $\mathbf{h}_j$ is the sum of all the city values, weighted by the attention weights.

\subsubsection{Allocation Network}

As we have extracted the topological information into vector representations, we can now calculate the allocation matrix using the attention mechanism again. The keys $\mathbf{k}^\prime$ and queries $\mathbf{q}^\prime$ are computed from the city and agent embeddings, respectively, and the relative importance of each city to each agent is computed as:
\begin{subequations}
\begin{align}
    \mathbf{k}^\prime_{j,i} &= \mathbf{\theta_{k^\prime}} \mathbf{f}_i, \quad  \mathbf{q}^\prime_{j} = \mathbf{\theta_{k^\prime}} \mathbf{h}_j, \quad i=2,3,...,n\\
    u^\prime_{j,i} &= \frac{\mathbf{q}^{\prime\intercal}_j \mathbf{k}^\prime_{j,i}}{\sqrt{d_{\mathbf{k^\prime}}}},\quad i=2,3,...,n\\
    \beta_{j,i}&=\alpha \tanh(u^\prime_{j,i}),\label{imp}
\end{align}
\end{subequations}
where $d_{\mathbf{k^\prime}}$ is the dimension of new keys. Note that queries only depend on the agent embedding but not the city's. The notation $u^\prime_{j,i}$ represents the non-clipped importance scores, $\alpha$ is a clip constant, and $\beta_{j,i}$ is the clipped importance score of city $i$ to agent $j$, which will be used to decide which agent is assigned to visit the city. The probability of the city $i$ is visited by agent $j$ is calculated by applying the softmax function on the importance over all agents:
\begin{equation}
    p_{j,i}=\frac{e^{\beta_{j,i}}}{\sum_j e^{\beta_{j,i}}}.
\end{equation}
The allocation matrix has a shape of $N \times M$ with $N$ and $M$ are the number of cities and agents, respectively. The model produces the final allocation by sampling each row of the allocation matrix.
With the allocation, the MTSP is decomposed into $M$ TSPs. A classic TSP solver (such as Google OR-Tools) is invoked on each of the TSP to find the tour length of each TSP. The final return $L$ of the MTSP is the maximum tour length among all the TSPs. Because the TSP solver is non-differentiable and the allocation variables are discrete, network optimization is very challenging.

\subsection{Optimization}\label{sec:optimization}
Now we demonstrate how the non-differentiabiliy prevents us from using the analytical gradient, and how our control-variate approach can pass low-variance gradient estimations through the classic solver and the discrete allocation space.
\subsubsection{Gradient}
Directly taking the derivative of the objective $L=\max_jD(g(a_j;\mu))$ w.r.t. the neural parameter $\theta$, we have the analytical form of the gradient:
\begin{equation}
    \nabla_\theta L = \frac{\partial L}{\partial f} \frac{\partial f}{\partial \mathbf{\theta}} +  \frac{\partial L}{\partial g} \frac{\partial g}{\partial \mathbf{\mu}^*}{\frac{\partial \mathbf{\mu}^*}{\partial \mathbf{\theta}}}.
\end{equation}
There are two major difficulties prevent us from directly deploying this equation to update $\mathbf{\theta}$. First, the term $\frac{\partial L}{\partial f} \frac{\partial f}{\partial \mathbf{\theta}}$ is always zero since the objective $L$ does not directly depend on $f$. Thus, this term cannot provide any information to update $\theta$. The second term cannot be directly computed because: (i) for $\frac{\partial L}{\partial g}$, the $\max$ function used to select the longest tour length is not differentiable; (ii) for $\frac{\partial g}{\partial \mu^*}$ and ${\frac{\partial \mu^*}{\partial \theta}}$, the optimization process in $g(\cdot)$ is not differentiable; and (iii) $\mu^*$ and $\theta$ related implicitly.
To deal with the non-differentiability, we can use the Log-derivative trick, which has been widely used in RL, and estimates the above gradient as
\begin{equation}
     \nabla_\theta L =L\frac{\partial }{\partial \theta}\log P(\theta).
\end{equation}
However, our experiments showed that this estimator provides a gradient with a high variance, which slows down the convergence rate and compromises the solution quality.

\subsubsection{Control Variate}
To address the above issue, we introduce a control variate~\cite{nelson1990control} to reduce the variance of a stochastic estimator. Suppose $d$ (as in derivative) is a random variable (RV), $h(d)$ is an estimator of $\mathbb{E}(d)$, the control variate $c(d)$ is a function of $d$ whose mean value is known. Define a new estimator with a control variate as:
\begin{equation}\label{eq:control-variate}
    h(d)_{\text{new}} = h(d) + \zeta(c(d) - \mathbb{E}[c(d)]).
\end{equation}
When constant $\zeta$ is properly chosen, $h(d)_{new}$ has the same expected value but lower variance than $h(d)$, as long as the control variate $c(d)$ is correlated to $h(d)$.
We briefly summarize the underlying principle of control variate technique~\cite{meyn2008control} as follows and omit the dependency on $d$ for simplicity.

\begin{algorithm}[t]
\caption{The iMTSP algorithm}\label{alg:iMTSP}
\begin{algorithmic}

\Require The coordinate data $X$ and embedded data $X^\prime$, allocation network $f(X^\prime;\theta)$, TSP solver $g(A;\mu)$, surrogate network $s(P(\theta);\gamma)$, learning rates $\alpha_1$, $\alpha_2$
\While {not converged}
    \State $P \gets f(X^\prime;\theta)$
    \State $A \gets$ Sampling $P$ by row
    \State $C \gets g(A;\mu)$, $C^\prime \gets s(P(\theta);\gamma)$
    \State $ \frac{\partial}{\partial \theta} U_\theta \gets [L-L^\prime]\frac{\partial }{\partial \theta}\log P(\theta)+\frac{\partial }{\partial \theta}s(P(\theta);\gamma)$
    \State $\frac{\partial}{\partial \phi} U_\gamma \gets \frac{\partial}{\partial \gamma} (\frac{\partial}{\partial \theta}U_\theta)^2$
    \State $\theta \gets \theta-\alpha_1\frac{\partial}{\partial \theta} U_\theta$
    \State $\gamma \gets \gamma-\alpha_2\frac{\partial}{\partial \gamma}  U_\gamma$
\EndWhile\\
\Return $\theta$
\end{algorithmic}
\end{algorithm}

\begin{lemma}
Let $h$ denotes an RV with an \textbf{unknown} expected value $\mathbb{E}(h)$, and $c$ denotes an RV with a known expected value $\mathbb{E}(c)$. The new RV $h_{new}=h+\zeta(c-\mathbb{E}(c))$ has same expected value but smaller variance, when the constant $\zeta$ is properly chosen and $c$ is correlated with $h$.
\end{lemma}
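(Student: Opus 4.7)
The plan is to establish the two claims separately: equality of means, then strict reduction of variance for a suitable choice of $\zeta$. The mean identity is immediate from linearity of expectation: since $\mathbb{E}[c-\mathbb{E}(c)]=0$, one gets $\mathbb{E}(h_{\mathrm{new}})=\mathbb{E}(h)+\zeta\cdot 0=\mathbb{E}(h)$, regardless of $\zeta$. This is a one-line calculation that uses nothing about the joint distribution of $(h,c)$.

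For the variance reduction, I would expand $\mathrm{Var}(h_{\mathrm{new}})$ using the bilinearity of covariance. Writing $\tilde c = c-\mathbb{E}(c)$, we have
\begin{equation}
\mathrm{Var}(h_{\mathrm{new}}) = \mathrm{Var}(h) + 2\zeta\,\mathrm{Cov}(h,c) + \zeta^{2}\mathrm{Var}(c),
\end{equation}
which is a quadratic in $\zeta$ with positive leading coefficient (assuming $\mathrm{Var}(c)>0$). Minimizing over $\zeta$ by setting the derivative to zero yields the optimal choice
\begin{equation}
\zeta^{*} = -\frac{\mathrm{Cov}(h,c)}{\mathrm{Var}(c)}.
\end{equation}
Substituting $\zeta^{*}$ back into the quadratic and using the definition of the Pearson correlation $\rho(h,c)=\mathrm{Cov}(h,c)/\sqrt{\mathrm{Var}(h)\mathrm{Var}(c)}$ gives the clean form
\begin{equation}
\mathrm{Var}(h_{\mathrm{new}}) = \mathrm{Var}(h)\bigl(1-\rho(h,c)^{2}\bigr).
\end{equation}

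Finally I would conclude: because $\rho(h,c)^{2}\in[0,1]$, the new estimator never has larger variance, and the inequality is \emph{strict} precisely when $\rho(h,c)\neq 0$, i.e. when $c$ is (linearly) correlated with $h$. This matches the hypothesis stated in the lemma and identifies ``properly chosen $\zeta$'' as any $\zeta$ in the open interval whose endpoints are the two roots of $2\zeta\,\mathrm{Cov}(h,c)+\zeta^{2}\mathrm{Var}(c)=0$; the optimum is $\zeta^{*}$ above.

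The argument is essentially a one-variable quadratic minimization, so there is no deep obstacle. The only subtle point worth flagging explicitly is that $\mathbb{E}(h)$ being ``unknown'' plays no role in the algebra: we never need its value, which is exactly why control variates are useful for estimation, and this is the conceptual content the lemma is meant to record before it is applied to the stochastic policy gradient in the next subsection.
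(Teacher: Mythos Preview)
Your proof is correct and follows essentially the same approach as the paper: expand $\mathrm{Var}(h_{\mathrm{new}})$ as a quadratic in $\zeta$, minimize to obtain $\zeta^{*}=-\mathrm{Cov}(h,c)/\mathrm{Var}(c)$, and substitute back to get $(1-\rho^{2})\mathrm{Var}(h)$. You additionally verify the mean identity explicitly (the paper leaves it implicit) and identify the full interval of admissible $\zeta$, both of which are welcome but do not change the argument.
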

\begin{proof}
The variance of the new estimator is
    \begin{equation}
        \mathrm{Var}(h_{\text{new}})=\mathrm{Var}(h)+\zeta^2\mathrm{Var}(c)+2\zeta \mathrm{Cov}(h,c).
    \end{equation}
Differentiate the above equation w.r.t. $\gamma$ and set the derivative to zero, the optimal choice of $\zeta$ is
    \begin{equation}
        \zeta^*=-\frac{\mathrm{Cov}(h,c)}{\mathrm{Var}(c)}.
    \end{equation}
The minimum variance of $h_{\text{new}}$ with $\zeta^*$ is $(1-\rho^2_{h,c})\mathrm{Var}(h)$ where $\rho$ is the correlation coefficient of $h$ and $c$. Since $\rho\in[-1,1]$, the inequality $\mathrm{Var}(h_{\text{new}})\leq \mathrm{Var}(h)$ always holds with $\zeta=\zeta^*$. When $\lVert \rho \rVert\rightarrow 1$, $\mathrm{Var}(h_{\text{new}})\rightarrow 0$.
\end{proof}

\begin{table*}[t]
  \centering
    \caption{Performance comparison on test MTSPs. The boldface denotes the best results on each training size. iMTSP provides solutions with overall $3.2 \pm 0.01\%$ shorter maximum route length comparing with the RL baseline, and has evidently advantages over the Google OR-Tools. The last column indicates how much longer are the baseline's routes than iMTSP's routes, averaging on all test size.}
  \begin{tabular}{ccc|ccccccc|c}
    \toprule
    \multicolumn{3}{c|}{Training Setting} & \multicolumn{7}{c|}{\# Test Cities}& \multirow{2}{*}{Avg gap}\\
    \cmidrule{1-10}
    Model& \# Agents & \# Cities & 400 & 500 & 600 & 700 & 800 & 900 & 1000\\
    \midrule
         Google OR-Tools~\cite{ORTools_options}& 10 & N.A. & 10.482 & 10.199 & 12.032 & 13.198 & 14.119 & 18.710 & 18.658&277.1\%\\
         Reinforcement Learning~\cite{hu2020reinforcement} &10& 50& 3.058 & 3.301 & 3.533 & 3.723 & 3.935 & \textbf{4.120} & \textbf{4.276}&0.2\%\\
         iMTSP (Ours) &10& 50&\textbf{3.046}&\textbf{3.273}&\textbf{3.530}&\textbf{3.712}&\textbf{3.924}&4.122&4.283&0\\
         \midrule
         Google OR-Tools~\cite{ORTools_options}&10&N.A.& 10.482 & 10.199 & 12.032 & 13.198 & 14.119 & 18.710 & 18.658&290.1\%\\
         Reinforcement Learning~\cite{hu2020reinforcement}&10& 100& 3.035 & 3.271 & 3.542 & 3.737 & 3.954 & 4.181 & 4.330&4.3\%\\
         iMTSP (Ours) &10& 100 &\textbf{3.019}&\textbf{3.215}&\textbf{3.424}&\textbf{3.570}&\textbf{3.763}&\textbf{3.930}&\textbf{4.042}&0\\
         \midrule
         Google OR-Tools~\cite{ORTools_options}&15&N.A.& 10.293 & 10.042 & 11.640 & 13.145 & 14.098 & 18.471 & 18.626&349.9\%\\
         Reinforcement Learning~\cite{hu2020reinforcement}&15& 100& 2.718 &2.915&3.103&3.242&3.441&3.609&3.730&6.3\%\\
         iMTSP (Ours) &15& 100&\textbf{2.614}&\textbf{2.771}&\textbf{2.944}&\textbf{3.059}&\textbf{3.221}&\textbf{3.340}&\textbf{3.456}&0\\
         \bottomrule
  \end{tabular}
  \label{tab:table2}
\end{table*}

In practice, $\zeta^*$ is usually inaccessible, and thus we cannot arbitrarily choose $c$. Without losing generality, consider the case where $h$ and $c$ are positively correlated. By solving the inequality $\zeta^2\mathrm{Var}(c)+2\zeta\mathrm{Cov}(h,c)<0$, we get $\zeta \in (\frac{-2\mathrm{Cov}(h,c)}{\mathrm{Var}(c)},0)$, which indicates that as long as $k$ is set in this range, $\mathrm{Var}(h_{\text{new}})$ is less than $\mathrm{Var}(h)$. When $h$ and $c$ are highly correlated and are of similar ranges, $\mathrm{Cov}(h,c) \approx \mathrm{Var}(c)$, thus $\zeta^* \approx -1$, which is our empirical choice of $\zeta$.

Inspired by the control variant gradient estimator \eqref{eq:control-variate} we design a lower variance, unbiased gradient estimator using trainable surrogate network $s$. Choose the allocation log-derivative estimator as $h(d)$ in (\ref{eq:control-variate}), and replace $c(d)$ and $\mathbb{E}[c(d)]$ with the surrogate log-derivative estimator and the actual surrogate network gradient, respectively, the upper-level costs $U_\theta$ of our bilevel framework is:
\begin{equation}
\label{eq:upper-theta}
    U_\theta \doteq [L(\mu^*) -L^\prime]\log P(\theta)+s(P(\theta);\gamma),
\end{equation}
where the constant $\zeta$ in (\ref{eq:control-variate}) is substituted with $-1$. $L^\prime$ is output of the surrogate network (control variate) but is taken as a constant by cutting its gradients.
In turn, our gradient for the allocation network parameters $\theta$ is:
\begin{equation}
    \frac{\partial U_\theta}{\partial \theta}=[L(\mu^*)-L^\prime]\frac{\partial }{\partial \theta}\log P(\theta)+\frac{\partial }{\partial \theta}s(P(\theta);\gamma).
\end{equation}
\subsubsection{Surrogate Network}
We implement the idea of control variate with a surrogate network $s(\gamma)$ in parallel with the TSP solver. Note that the surrogate network takes the allocation matrix $P$ as its input and predicts the maximum tour length, while the inputs of the actual TSP solver are allocations $A$ and coordinate data $X$. Naively optimizing the surrogate network to imitate the input-output map of the TSP solver is ineffective because learning such a map does not guarantee learning correct gradient information. Since we require the surrogate network (control variate) to reduce the gradient variance of allocation parameters, we directly optimize the surrogate parameters to minimize a single sample estimation of the allocation gradient variance. Thus, the upper-level objective $U_\gamma$ for the surrogate network is
\begin{equation}
        U_\gamma \doteq \rm{Var}( \frac{\partial U_\theta}{\partial \theta}).
\end{equation}

Using the fact $\rm{Var}( \frac{\partial U_\theta}{\partial \theta})= \mathbb{E}[(\frac{\partial U_\theta}{\partial \theta})^2] - \mathbb{E}[(\frac{\partial U_\theta}{\partial \theta})]^2$, the gradient of the surrogate network is
\begin{equation}\label{eq:grad-gamma}
\frac{\partial U_\gamma}{\partial \gamma} =
    \frac{\partial}{\partial \gamma}\mathbb{E}[ (\frac{\partial U_\theta}{\partial \theta})^2]-\frac{\partial}{\partial \gamma}\mathbb{E}[ \frac{\partial U_\theta}{\partial \theta}]^2 = \frac{\partial}{\partial \gamma}\mathbb{E}[ (\frac{\partial U_\theta}{\partial \theta})^2],
\end{equation}
The term $\frac{\partial}{\partial \gamma}\mathbb{E}[ (\frac{\partial U_\theta}{\partial \theta})^2]$ is always zero because the surrogate network has no explicit influence on the return $L$.
For ease of understanding, we summarize our control variate-based bilevel optimization method in Algorithm \ref{alg:iMTSP}.

\section{Experiments}\label{sec:experiments}

We next conduct comprehensive experiments to demonstrate the generalization ability, computational efficiency, and convergence speed of our iMTSP framework. We focus our experiments on large-scale settings ($\geq 400$ cities) because current classic solvers can already efficiently find near-optimal solutions for small-scale problems.

\subsection{Baselines}

To demonstrate the advantages of our method, we compare it with both classic methods and learning-based methods.
Specifically, we select a state-of-the-art reinforcement learning (RL)-based model from~\cite{hu2020reinforcement} as a baseline. Their framework can efficiently solve MTSPs with up to a few thousand cities.
Using S-batched REINFORCE, it has shown outstanding generalization ability and has a reduced variance for the gradient estimator than the vanilla REINFORCE~\cite{williams1992simple}.
The other potential baseline methods~\cite{liang2023splitnet, gao2023amarl,park2021schedulenet} either don't provide source codes or cannot be directly applied to our settings.
We also compare iMTSP with a well-known meta-heuristic method implemented by the Google OR-Tools routing library~\cite{kool2018attention, wu2021learning, hu2020reinforcement}.
It provides near-optimal solutions to MTSPs with a few hundred cities and is a popular MTSP baseline~\cite{ORTools_options}.
For abbreviation, we will refer to the two methods as an RL baseline and Google OR-Tools. All approaches are tested locally with the same hardware.

\begin{figure*}
\begin{subfigure}[t]{.33\textwidth}
  \centering
  \includegraphics[width=\linewidth, height=4cm]{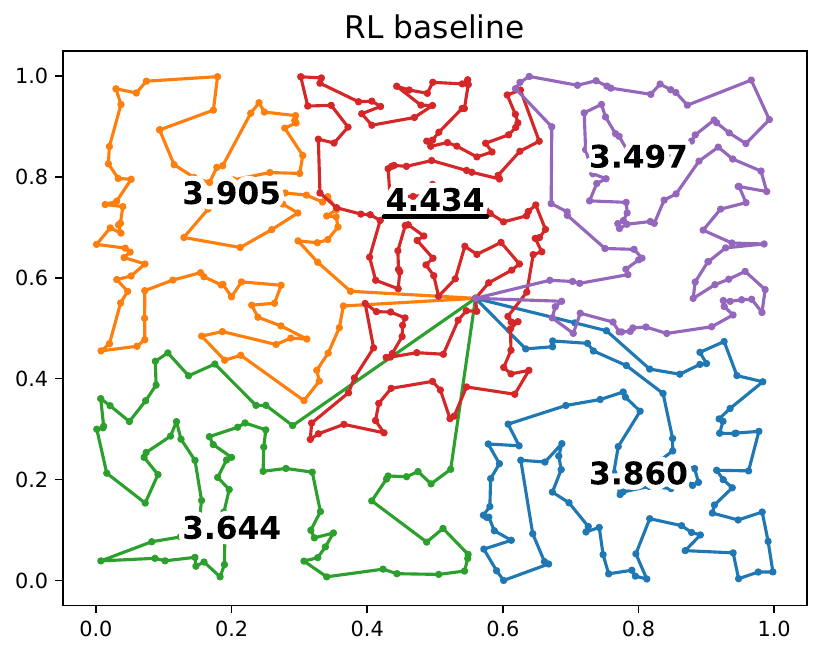}
  \caption{RL baseline on instance \#1.}
  \label{fig:learning_baseling}
\end{subfigure}
\begin{subfigure}[t]{.33\textwidth}
  \centering
  \includegraphics[width=\linewidth, height=4cm]{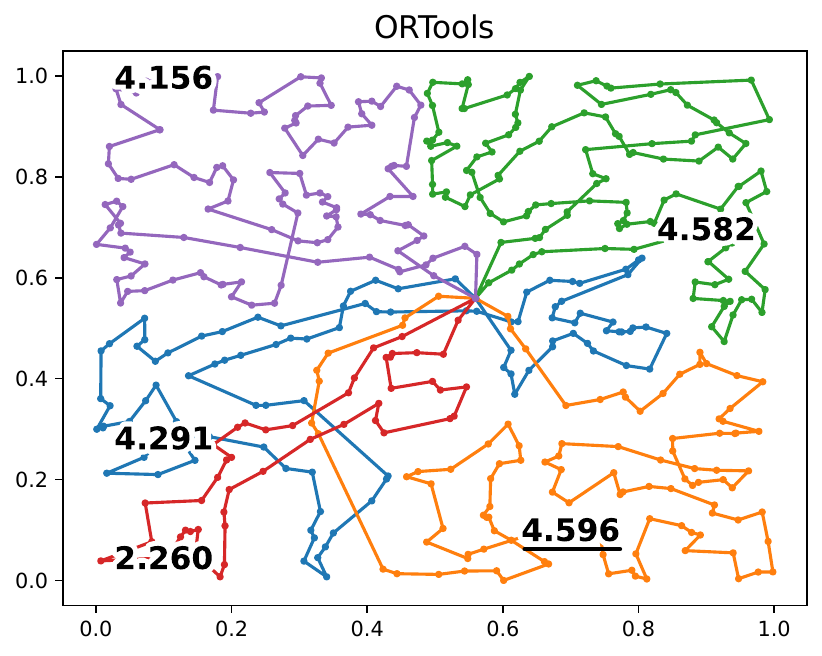}
  \caption{OR-Tools with $1800 \second$ runtime on instance \#1.}
  \label{fig:classic_baseline}
\end{subfigure}
\begin{subfigure}[t]{.33\textwidth}
  \centering
  \includegraphics[width=\linewidth, height=4cm]{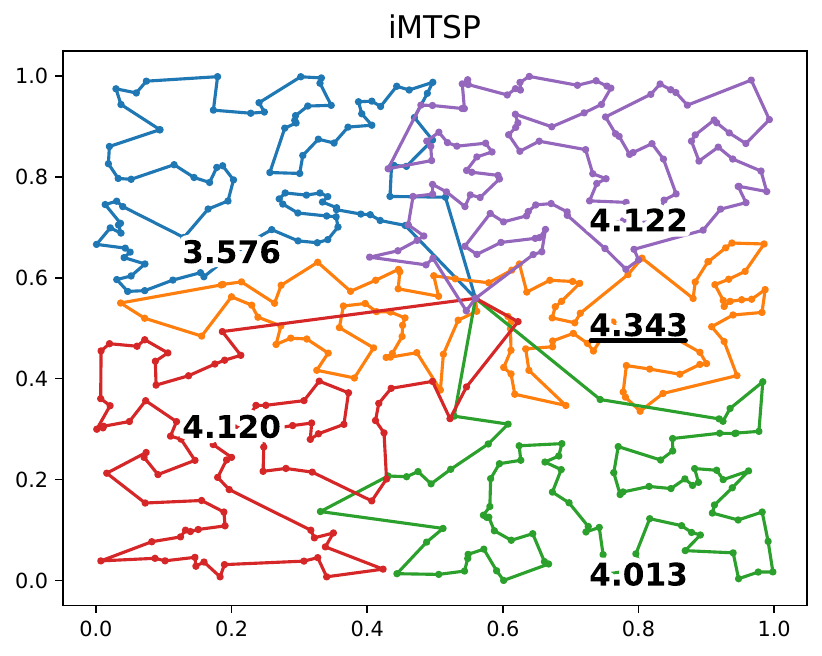}
  \caption{iMTSP on instance \#1.}
  \label{fig:iMTSP}
\end{subfigure}
\begin{subfigure}[t]{.33\textwidth}
  \centering
  \includegraphics[width=\linewidth, height=4cm]{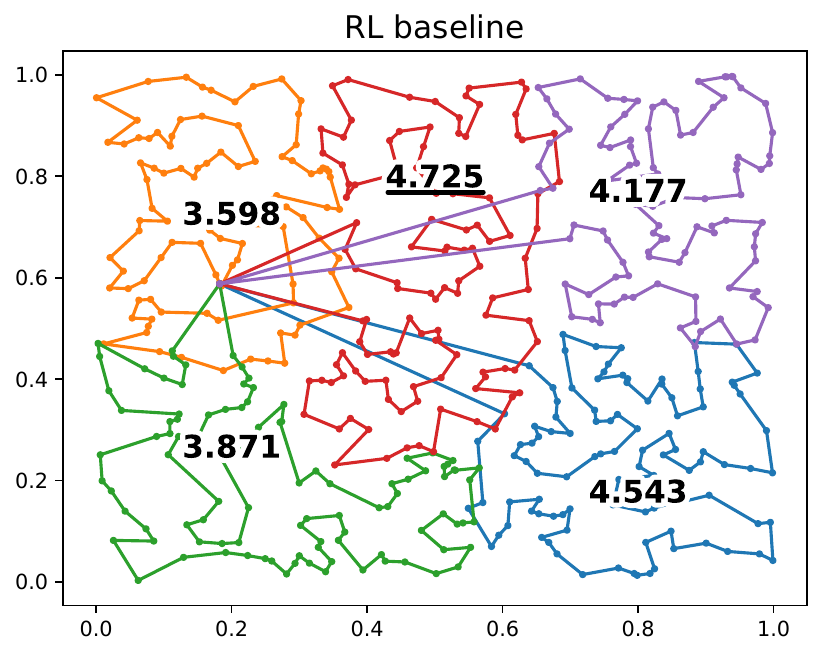}
  \caption{RL baseline on instance \#2.}
  \label{fig:learning_baseling1}
\end{subfigure}
\begin{subfigure}[t]{.33\textwidth}
  \centering
  \includegraphics[width=\linewidth, height=4cm]{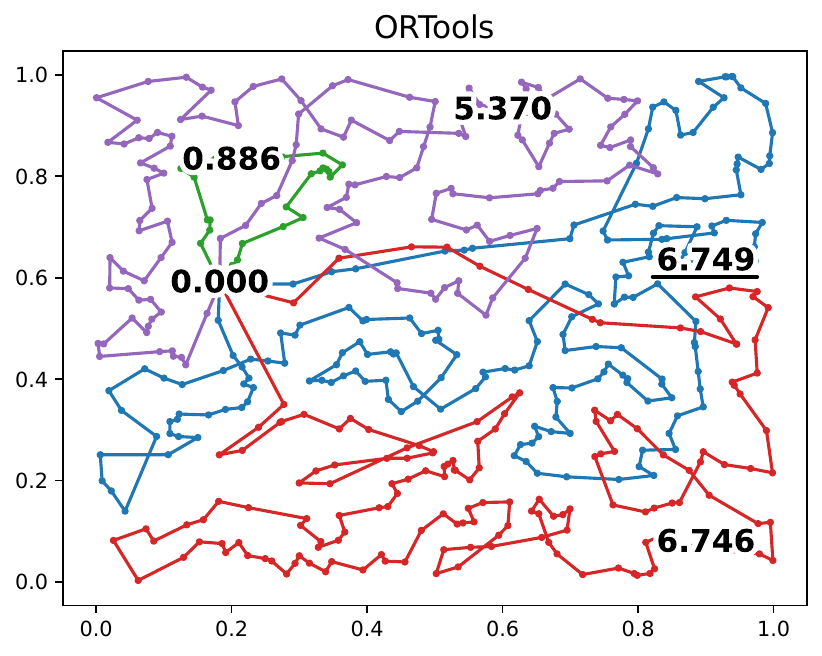}
  \caption{OR-Tools with $1800 \second$ runtime on instance \#2.}
  \label{fig:classic_baseline1}
\end{subfigure}
\begin{subfigure}[t]{.33\textwidth}
  \centering
  \includegraphics[width=\linewidth, height=4cm]{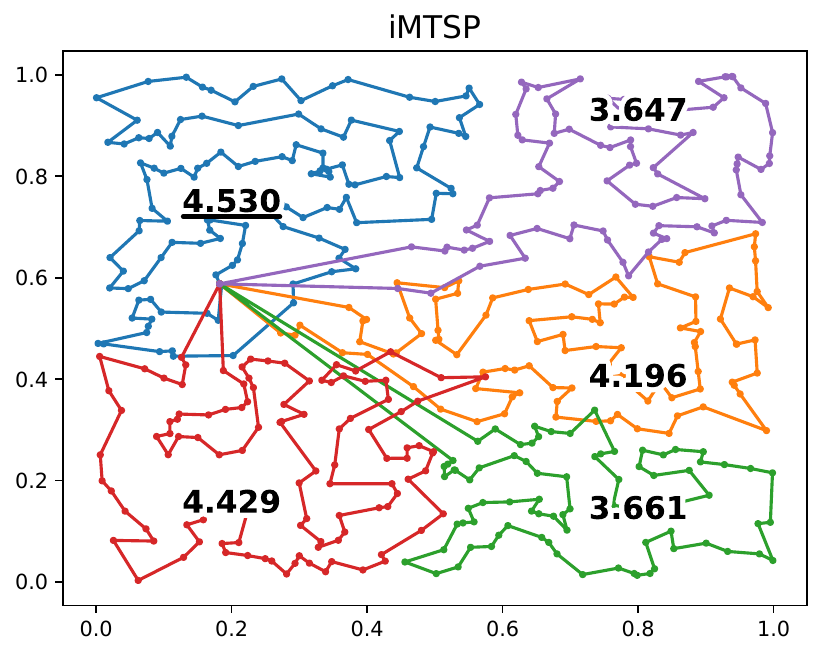}
  \caption{iMTSP on instance \#2.}
  \label{fig:iMTSP1}
\end{subfigure}
\caption{The figures visualize the performance of the baselines and the proposed model on two example MTSPs with $5$ agents and $500$ cities, where the first instance has a central depot and the second instance has an off-center depot. The numbers denote the length of the route, and different colors represent different agents. Longest tour lengths are underlined in each figure. iMTSP always produces the best solution and has fewer sub-optimal patterns like circular partial routes or long straight partial routes.}
\vspace{-10pt}
\label{fig:path}
\end{figure*}

\subsection{Experiment Setting}

We learn from~\cite{vinyals2015pointer, wu2021learning, kool2018attention} that, existing classic and learning-based approaches can handle MTSPs with about $150$ cities, but their performance will significantly compromise with $400$ or more cities. To demonstrate iMTSP's ability to generalization and handle large-scale problems, we conduct experiments on training sets with $50$ to $100$ cities but test the models on problems with $400$ to $1000$ cities. We believe this challenging setting can reflect the generalization ability of the proposed models.

In our tests, all the instances are generated by uniformly sampling $N$ coordinates in a unit rectangular so that both $x$ and $y$ coordinates are in the range of $0$ to $1$.
We employ a batch size of $512$ and a training-to-validation data size ratio of $10:1$. The test set consists of i.i.d. batches but with a larger number of cities ($400$ to $1000$) as aforementioned.

\begin{table}[t]
    \centering
    \caption{Computing time of iMTSP dealing with $1000$ cities. Due to a similar network structure, the RL baseline~\cite{hu2020reinforcement} has roughly the same runtime efficiency compared with ours, while Google OR-Tools~\cite{ORTools_options} cannot find feasible solutions within a similar period.}
    \begin{tabular}{c|ccc}
    \toprule
         Number of agents & 5 & 10 & 15 \\
         \midrule
         Computing time& 4.85&1.98&1.35\\
    \bottomrule
    \end{tabular}
    \label{tab:time}
\end{table}

Specifically, agents and cities are embedded into $64$ dimensional vectors. The allocation network contains one attention layer, where keys, queries, and values are of $128$, $16$, and $16$ dimensions, respectively. The clip constant $\alpha$ in (\ref{imp}) is set to be $10$.
The surrogate network consists of three fully connected layers, whose hidden dimension is $256$ and activation function is \texttt{tanh}. 
Google OR-Tools provides the TSP solver in iMTSP and the MTSP solver as a baseline. It uses the ``Global Cheapest Arc'' strategy to generate its initial solutions and uses ``Guided local search'' for local improvements. All the strategies are defined in~\cite{ORTools_options} and we selected the best options in the experiments.

\subsection{Quantitative Performance}

We next conduct a quantitative performance comparison with the two baseline methods and show it in \tref{tab:table2}. We focus on large-scale MTSPs since many approaches can well solve small-size MTSPs but very few algorithms (both classic or learning-based) can deal with large-scale problems.

\subsubsection{Comparison with Google OR-Tools}
By decomposing a large MTSP into several small TSPs, iMTSP is stronger to handle large-scale problems. We demonstrate such a merit by comparing our model with the classic routing module in OR-Tools, which is given $300$ seconds time budget. The results are shown in Table \ref{tab:table2}. Specially, these route lengths of OR-Tools are averaged over $10$ test instances. OR-Tools appears to be very inferior because it cannot converge to a local minimum given $300$ seconds time budget. In our experiments, OR-Tools can handle $500$ or less cities with $1800$ seconds budget. However, it does not converge even computing for $7200$ seconds with $600$ or more cities, indicating that its computational complexity grows steeply w.r.t. the problem size. Also, the difference of the route length increases as the problem size becomes larger and larger. Remember that the final routes of iMTSP are also solved by Google routing module, while it can easily handle MTSP with up to a few thousand cities with the help of the allocation network.

\begin{figure}[t]
    \centering
    \includegraphics[width=\linewidth]{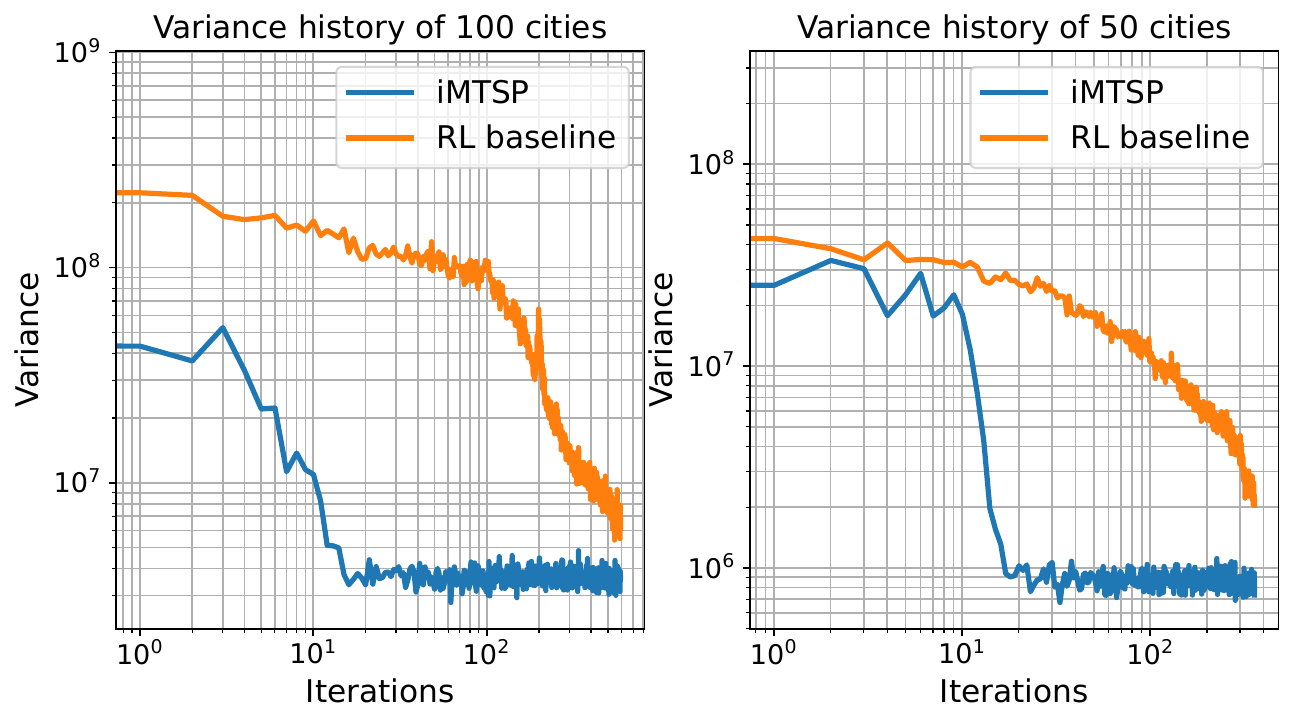}
    \caption{The gradient variance history of our method and the RL baseline during the training process with 50 and 100 cities. The $y$-axis is the sum of the natural logarithm of the variance of all trainable parameters in the allocation network, and $x$-axis denotes the number of iterations. Our gradient estimator converges about $20\times$ faster than the RL baseline.} 
    \label{fig:var history}
\end{figure}

\subsubsection{Comparison with the RL Baseline}
To show the effectiveness of our control variate-based optimization algorithm, we also compare iMTSP with the RL baseline~\cite{hu2020reinforcement} which shares a similar forward process as iMTSP. In most cases, training on bigger problems leads to a better generalization ability on large-scale problems, but it also indicates larger parameter space, more complicated gradient information, and longer training time. This makes the quality of the gradient very crucial. As presented in the \tref{tab:table2}, iMTSP outperforms the RL baseline in most cases, providing solutions with $3.2 \pm 0.01\%$ shorter maximum route length over the RL baseline on average. When the models are trained with $100$ cities, the difference between the route lengths monotonically increases from $0.4\%$ to $8.0\%$ and from $3.4\%$ to $8.9\%$, respectively with $10$ and $15$ agents. The results demonstrate that with the lower-variance gradient provided by our control variate-based optimization algorithm, iMTSP usually converges to better solutions when being trained on large-scale instances. 

\subsection{Qualitative Performance}
To find out the specific advantages in iMTSP's solutions, we plot and compare the results of iMTSP and two baselines on two example instances.
\fref{fig:path} provides a visualization of the solutions from both baselines and iMTSP on two instances with $5$ agents and $500$ cities. The depot of the first instance is close to the center of cities while it's off the center in the second instance. In \fref{fig:learning_baseling1}, there are long straight partial routes (e.g., blue and purple), indicating some agents have travelled for a long distance before they visit their first destination. Also, some circular parts exist in the OR-Tools' solutions (e.g., the green route in \fref{fig:classic_baseline}) meaning that this agent travels for extra distance to visit the cities on the circle. Both of these are typical symptoms of sub-optimality. Such symptoms are not observed in the two solutions from iMTSP. In both instances, iMTSP produces the best performance.
Note that both the RL baseline and iMTSP finish their computation within a few seconds while OR-Tools takes $1800$ seconds to produce comparable solutions.

\subsection{Efficiency Analysis}
Since iMTSP contains both a learning-based allocation network and a classic TSP solver, it is important to identify the bottleneck of the architecture for future improvements.
As in Table \ref{tab:time}, with $5$, $10$, and $15$ agents, the computing time of our model are $4.85$ seconds, $1.98$ seconds, and $1.35$ seconds, respectively, to solve one instance with $1000$ cities. Note that the computing time decrease as the number of agents increases. This fact indicates that the major computational bottleneck is the TSP solvers rather than the allocation network, because more agents are required to run more times of TSP solvers. One of the promising solutions to further reduce our computing time is to create multiple threads to run the TSP solvers in parallel since the TSPs in the lower-level optimization of iMSTP are independent. Since this is not the focus of this paper, we put it as a future work. 

\subsection{Gradient Variance}
As demonstrated in \sref{sec:optimization}, with an optimized surrogate network, the variance of iMTSP's gradient are expected to be smaller than the gradient from the vanilla REINFORCE algorithm~\cite{williams1992simple}. We verify such a hypothesis by explicitly recording the mini-batch gradient variance during the training process. The experiment is conducted twice with $10$ agents visiting $50$ cities and $100$ cities. Training data are divided into several mini-batches, each of which contains $32$ instances. The gradient is computed and stored for every mini-batch, and the variance is later computed with these stored gradient. The network parameters are updated with the average gradient of the whole batch.
The results are shown in \fref{fig:var history}, where the $x$-axis denotes the number of iterations and the $y$-axis represents the natural logarithm of the summed variance of all trainable parameters in the allocation network. It is observed that the gradient variance of our iMTSP is much smaller and converges $20\times$ faster than the RL baseline. This verifies the effectiveness of our control variate-based bilevel optimization process.

\begin{table}[t]
  \centering
    \caption{Results of the ablation studies. The first row is the results from a model optimized by the baseline algorithm while the second line is our iMTSP approach. Our iMTSP approach always produces shorter maximum tour lengths.}
  \resizebox{\linewidth}{!}{
  \begin{tabular}{c|ccccccc}
    \toprule
    \multirow{2}{*}{Optimization}& \multicolumn{6}{c}{\# Test Cities}\\
    \cmidrule{2-8}
        &400 & 500 & 600 & 700 & 800 & 900 & 1000\\
    \midrule
    REINFORCE&3.200&3.477&3.779&4.007&4.249&4.488&4.699\\
    Control Variate&\textbf{3.046}&\textbf{3.273}&\textbf{3.530}&\textbf{3.712}&\textbf{3.924}&\textbf{4.122}&\textbf{4.283}\\
    \bottomrule
  \end{tabular}
  }
  \label{tab:table4}
\end{table}

\subsection{Ablation Study}
To confirm that the improvements of solution quality are majorly contributed by the control variate-based optimization algorithm, we also track the performance of the fine-tuned network with the baseline optimization method. As shown in Table \ref{tab:table4}, the route lengths from our control variate-based optimization algorithm are averagely $7.8\% \pm 2.1\%$ shorter than that from the baseline algorithm. We can also observe that better results always come from iMTSP, which indicates that the better solution quality is mainly contributed by our control variate-based optimization algorithm.

\section{Conclusion and Future Work}\label{sec:conclusion}
In the paper, we reformulate the Min-Max MTSP as a bilevel optimization problem. We introduce a self-supervised framework iMTSP, which can efficiently handle large-scale MTSP with thousands of cities. With the control variate-based optimization algorithm, iMTSP produces a low-variance gradient through a non-differentiable TSP solver and a discrete allocation space to update the allocation network. Experimental results demonstrated the advantages of iMTSP in solution quality, computational efficiency, and optimization stability. In future work, we will also consider the constraints of environmental obstacles and inter-agent collision, which will enable iMTSP to conduct multi-agent path planning in real-world scenarios.

\section{Acknowledgements}
This work was, in part, funded by the DARPA grant DARPA-PS-23-13. The views and conclusions contained in this document are those of the authors and should not be interpreted as representing the official policies, either expressed or implied, of DARPA.

\balance
\bibliographystyle{IEEEtran}
\bibliography{ref}
 
\end{document}